\documentclass[letterpaper, journal]{IEEEtran}

\usepackage{cite}
\usepackage{amsmath,amssymb,amsfonts, amsthm}
\usepackage{algorithmic}
\usepackage{graphicx}
\usepackage{textcomp}
\usepackage{xcolor}
\usepackage{subcaption}

\usepackage[ruled,vlined]{algorithm2e}   % main algorithm 
\usepackage{setspace} % For pseudo-code style spacing

\usepackage[colorlinks=true, linkcolor=black, citecolor=blue, urlcolor=blue]{hyperref}

\usepackage{bm}
\usepackage{booktabs} % for tablular midrule bottomrule
\usepackage{mathtools}

\usepackage{multirow}
\usepackage{makecell}
\usepackage{threeparttable}
\usepackage{adjustbox}

\usepackage{titlesec}
\titlespacing*{\subsection}{0pt}{4.0pt}{1.0pt}
\titlespacing*{\section}{0pt}{8.0pt}{4.0pt}

\newtheorem{proposition}{Proposition}

\newcommand{\xobs}{x_{\mathrm{obs}}}
\newcommand{\qnaive}{q_{\mathrm{naive}}}
\newcommand{\qcorr}{q_{\mathrm{bridge}}}
\newcommand{\qrefined}{q_{\mathrm{bridge+}}}
\newcommand{\KL}{D_{\mathrm{KL}}}

\newcommand{\spaceabs}{\,|\,}

\def\BibTeX{{\rm B\kern-.05em{\sc i\kern-.025em b}\kern-.08em
    T\kern-.1667em\lower.7ex\hbox{E}\kern-.125emX}}
\begin{document}

\title{Bridged SBI: Correcting Biased Low-Fidelity Posteriors for Cost-Efficient High-Fidelity Inference\\

\author{Gahee Kim$^{1}$, Yuki Kadokawa$^{1}$, Sandro M. Alcantara Tacora$^{1}$, Taro Abe$^{2}$, \\Daisuke Endo$^{2}$, Genki Yamauchi$^{2}$, Takeshi Hashimoto$^{2}$, and Takamitsu Matsubara$^{1}$%
\thanks{This work was supported in part by the Japan Science and Technology Agency (JST) (Moonshot Research and Development) under Grant JPMJMS2032, and in part by JST (Core Research for Evolution Science and Technology (CREST)) under Grant JPMJCR2555. Corresponding author: Gahee Kim (email: gahee.kim@naist.ac.jp).}
\thanks{$^{1}$G. Kim, Y. Kadokawa, S. M. Alcantara Tacora, and T. Matsubara are with the \textsc{Nara Institute of Science and Technology}, Nara, Japan.}
\thanks{$^{2}$T. Abe, D. Endo, G. Yamauchi, and T. Hashimoto are with the \textsc{Public Works Research Institute}, Ibaraki, Japan.} 
}}

\maketitle

\begin{abstract}
Accurate calibration of particle-based simulators is crucial for robotic earthwork simulation, but analytical calibration is challenging due to this task's highly nonlinear particle dynamics and the black-box nature of conventional simulators. Although simulation-based inference (SBI) can estimate posterior distributions over simulation parameters solely from forward simulations, applying SBI directly to high-fidelity (HF) particle simulators is often computationally prohibitive. Low-fidelity (LF) simulators with coarser particles can reduce this cost, but changes in particle size and particle count shift the parameter values needed to reproduce the same observation, producing biased LF posteriors. We propose Bridged SBI, which leverages a biased but informative LF posterior to guide HF inference. This method first uses inexpensive LF simulations to identify a coarse high-density parameter region, and then it learns a local residual bridge to transport LF posterior samples toward HF-consistent regions by correcting the LF--HF discrepancy. We analyze how sequential multi-fidelity SBI (Naive-MF) can suffer from LF-induced posterior miscoverage when it directly relies on the LF posterior without discrepancy correction. We then show that Bridged SBI is designed to alleviate this issue by explicitly modeling the LF--HF discrepancy through residual correction. Experiments on both sim-to-sim particle-parameter calibration and real-to-sim calibration with real soil observation show that Bridged SBI produces more accurate and reliable HF posteriors than HF-only SBI or the Naive-MF baseline, especially under limited HF simulation costs.
\end{abstract}

% \end{abstract}
\begin{IEEEkeywords}
Calibration and Identification, Probability and Statistical Methods
\end{IEEEkeywords}

\section{Introduction}
\IEEEPARstart{C}{alibrating} particle-based simulators is essential for ensuring accurate digital twins of robotic earthwork tasks, but this is challenging due to the highly nonlinear and black-box nature of particle dynamics. In particle models, including the discrete element method (DEM), macroscopic soil behavior emerges from particle-level interactions governed by particle parameters such as friction and restitution coefficients~\cite{CUNDALL1979}. Therefore, these parameters must be estimated to reproduce soil behavior in simulation. However, analytical identification is typically intractable, and manual tuning is labor-intensive~\cite{COETZEE2017104}. To address this challenge, simulation-based inference (SBI)~\cite{Cranmer2020-cc}, which estimates posterior distributions over simulation parameters using only forward simulations, has recently emerged as a promising approach.

\begin{figure}[tb]
\centering 
\includegraphics[width=0.95\columnwidth]{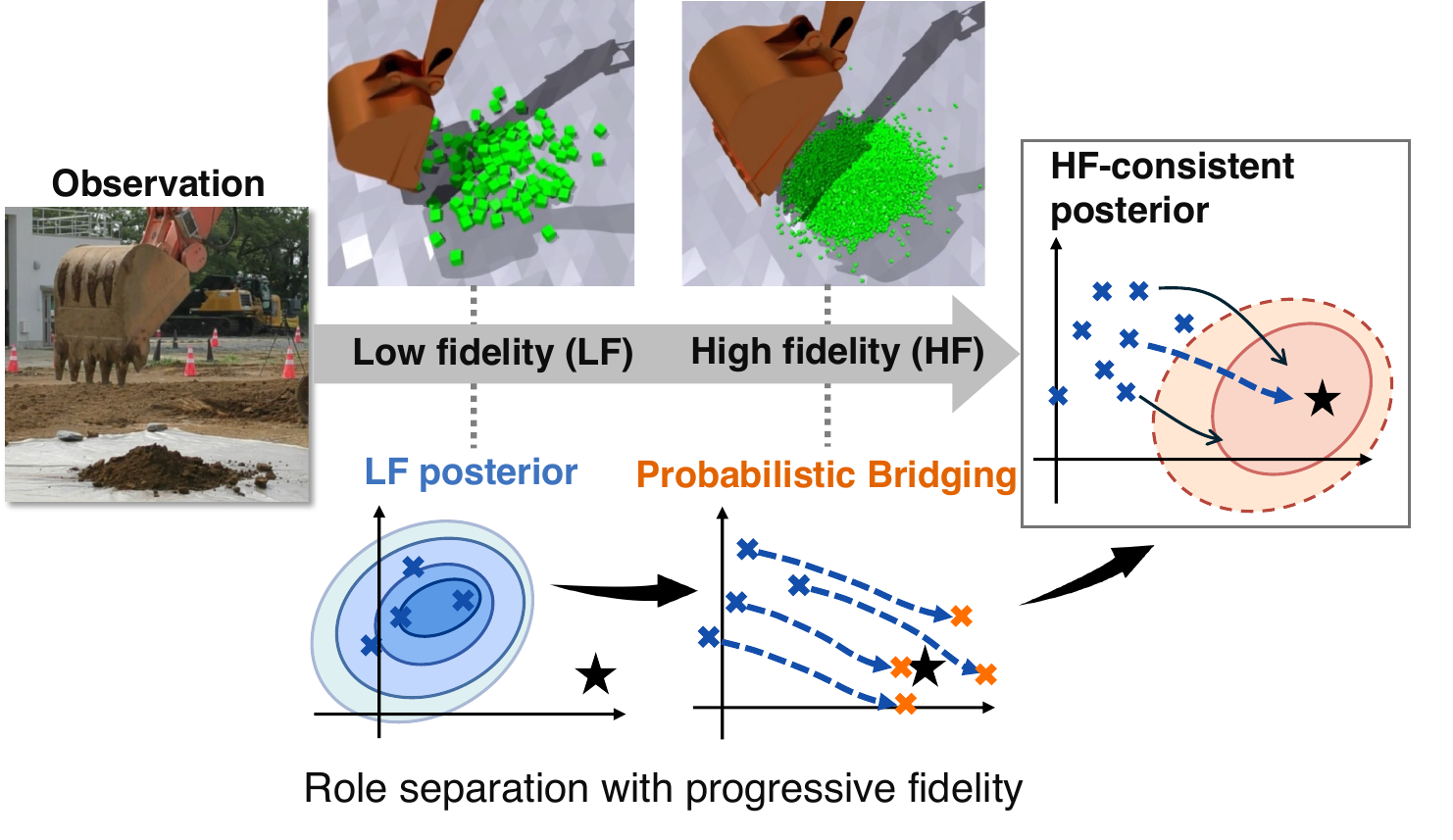}
\captionsetup{font=footnotesize}
\caption{
Overview of Bridged SBI for low-fidelity-guided posterior correction. Bridged SBI decomposes simulation calibration into low-fidelity-guided exploration and posterior correction. Low-fidelity simulator provides a low-cost coarse posterior over plausible but potentially biased parameter regions due to cross-fidelity model discrepancy. Bridged SBI then uses a small number of high-fidelity simulations to learn a local residual bridge. This bridge applies learned residual corrections to coarse posterior samples and transports them toward an HF-consistent posterior. }
\label{fig:main_fig1}
\vspace{-10pt}
\end{figure}

Despite its flexibility, applying SBI to high-fidelity (HF) particle simulators is computationally prohibitive, since reducing particle size to increase fidelity dramatically increases simulation costs. Low-fidelity (LF) simulators with larger particles can mitigate this cost burden. However, the resulting posteriors generally deviate from the HF target posterior because changes in particle size and count alter fundamental dynamics such as collision frequency and energy dissipation patterns~\cite{Bierwisch2009, dufresne2022energy}. Consequently, the parameter regions needed to reproduce a given observation can shift across fidelities~\cite{ROESSLER201858}, posing a trade-off between cost and estimation accuracy.

To address this trade-off, we exploit the observation that the LF simulator often provides a systematically biased but computationally cheaper representation of the HF simulator. Although changing the simulation fidelity shifts the corresponding parameter regions, the LF and HF simulators still describe the same underlying physical system, and their simulated outputs thus remain structurally correlated across fidelities. This suggests that LF simulations may still provide useful structural information about the HF posterior, even when the LF posterior itself is biased or misaligned. Consequently, rather than treating the LF posterior as a reliable approximation of the HF posterior, we instead use it as an imperfect but informative guide that can be corrected with a small number of HF simulations.

In this paper, we propose \emph{Bridged SBI}, which decomposes HF inference into LF inference and LF--HF discrepancy correction (Fig.~\ref{fig:main_fig1}). The proposed method first uses inexpensive LF simulations to obtain a coarse estimate and then applies a limited number of HF simulations near the resulting coarse parameter region to learn a local residual bridge. This bridge transports LF posterior samples toward HF-consistent regions, permitting HF inference with substantially fewer HF simulations. We also provide a theoretical analysis showing that sequential multi-fidelity SBI (Naive-MF), which ignores the LF--HF discrepancy, can suffer from LF-induced posterior miscoverage, whereas Bridged SBI can mitigate this issue by explicitly modeling the LF--HF discrepancy through residual correction, allowing recovery of HF-consistent posterior regions that may be underrepresented by the LF posterior. We validate Bridged SBI through sim-to-sim and real-to-sim particle-parameter calibration tasks, demonstrating its effectiveness in a robotic earthwork scenario using real-world soil observation. 

The main contributions of this paper are as follows:
\begin{itemize}
\item We formulate particle-parameter calibration as decomposed posterior inference, combining LF inference with explicit LF--HF discrepancy correction. We analyze how LF-induced posterior miscoverage can arise in multi-fidelity SBI when the LF--HF discrepancy is not explicitly addressed.
\item We propose \emph{Bridged SBI}, which uses LF simulations to explore low-cost parameter-space and limited HF simulations to learn a local residual bridge for correcting LF posterior samples.
\item We demonstrate through sim-to-sim and real-to-sim calibration tasks that Bridged SBI improves the accuracy and stability of HF inference, particularly under severely constrained HF simulation budgets, reducing KL divergence by up to 75\% compared with Naive-MF and up to 67\% compared with HF-only SBI in real-to-sim calibration.
\end{itemize}

\section{Related Work}
\subsection{Parameter Estimation in Particle Simulators}

Particle-based simulators, such as those adopting the discrete element method (DEM), are widely used to model granular materials~\cite{CUNDALL1979} and robotic earthwork tasks~\cite{kadokawa2025}. Their behavior is governed by particle-level parameters including friction and restitution coefficients, which must be calibrated to reproduce target macroscopic behaviors~\cite{COETZEE2017104}. Existing calibration approaches often rely on matching handcrafted macroscopic statistics, such as the angle of repose or pile shape, between simulation and experiments~\cite{COETZEE2017104,ROESSLER201858}. More recently, simulation-based inference (SBI) has been explored in robotics for simulator calibration~\cite{Ramos2019-oy, gahee2025-sbi} and domain randomization~\cite{Muratore2022-cy}. In particular, neural SBI methods enable posterior inference directly from high-dimensional observations such as images~\cite{Papamakarios2016-ki,Lueckmann2017-sa,Greenberg2019-on,Cranmer2020-cc}, making them suitable for robotics earthwork scenarios involving complex spatial deformation caused by robot--soil interaction. However, applying SBI to high-fidelity particle simulators remains computationally challenging because increasing simulation fidelity typically requires the reduction of particle size, which substantially increases the number of particles as well as simulation cost~\cite{DiRenzo2021CoarseGrainReview}.

To mitigate this challenge, this paper considers parameter estimation from a multi-fidelity perspective. Instead of relying solely on expensive high-fidelity (HF) simulations, we exploit low-fidelity (LF) simulations, which use larger particles and fewer total particles, as a cost-efficient source of information for inference. By combining LF simulations with a limited number of HF simulations, we reduce the simulation burden of HF-only SBI. Since this process relies only on forward simulations and does not require an explicit likelihood function or analytical model, it remains within the SBI framework and can be applied broadly to black-box simulators.

\subsection{Multi-Fidelity Simulation-Based Inference}

In the context of multi-fidelity SBI, Approximate Bayesian Computation (ABC) methods have used LF simulations to narrow the sampling space for HF simulations~\cite{Warne2018-gk,Prescott2024-re}. Recent neural SBI approaches have incorporated LF information into the training of HF neural posterior estimators, for example by using LF posteriors to guide the allocation of HF simulations~\cite{Thiele2025-zm, Hikida2025-ah} or by initializing HF posterior estimators from LF-pretrained models~\cite{Krouglova2026-ew}. While these methods reduce the required number of expensive HF simulations, they implicitly rely on the ability of LF-guided parameter regions to sufficiently cover HF-plausible regions. However, this assumption can fail in particle-based simulators, where changes in particle size and count can affect particle dynamics and the corresponding parameter regions across fidelities~\cite{Bierwisch2009,ROESSLER201858}. Consequently, the LF posterior can be biased and assign little probability mass to regions that remain plausible under the HF simulator.

Unlike existing multi-fidelity neural SBI methods, our approach explicitly addresses the LF--HF discrepancy by learning a correction from LF-guided to HF-consistent parameter regions, thus allowing recovery of HF-plausible regions that may be underrepresented by the LF posterior.

\section{Proposed Method}
In this section, we first show that sequential multi-fidelity SBI (Naive-MF) can fail when LF--HF discrepancy is ignored. We then introduce Bridged SBI, which decomposes HF inference into two stages: LF inference and the subsequent discrepancy correction. This correction is achieved as a local residual bridge learned from a small number of HF simulations.

\subsection{Problem Setting}

Let $\theta \in \Theta$ denote simulator parameters (e.g., friction, restitution) and $x \in \mathcal{X}$ denote simulator outputs (e.g., height maps, trajectories). We consider two stochastic black-box simulators defined on the same parameter space $\Theta$ and output space $\mathcal{X}$:
 (1) \textbf{High-fidelity (HF):} $x_h \sim p_h(x \spaceabs \theta)$, which more accurately represents the target behavior but is computationally expensive; and (2) \textbf{Low-fidelity (LF):} $x_l \sim p_l(x \spaceabs \theta)$, which is computationally cheaper but less accurate. Both are black-box simulators, so $p_h(x \spaceabs \theta)$ and $p_l(x \spaceabs \theta)$ are not available in closed form and can only be accessed through forward simulations.

Given a real-world observation $\xobs$, our goal is to infer the HF posterior, which we refer to as the target posterior,
\begin{equation}
p_h^{\star}(\theta \spaceabs \xobs)
\propto
p(\theta) L_h(\xobs \spaceabs \theta),
\end{equation}
where $p(\theta)$ is the prior and $L_h(\xobs \spaceabs \theta)$ denotes the HF-induced likelihood, which is intractable due to the simulator's black-box nature. This posterior is estimated by SBI from a large amount of simulation data, which is often impractical for costly HF simulators, motivating a multi-fidelity strategy that combines LF and HF simulations.

\subsection{Structural Limitation of Sequential Multi-Fidelity Inference}\label{sec:limit_of_naive}

A common heuristic in multi-fidelity SBI is to first estimate an LF posterior using inexpensive LF simulations and then use it as an informed prior for the HF stage. We denote the LF posterior by
\begin{equation}
p_l^{\star}(\theta \spaceabs \xobs)
\propto 
p(\theta)\, L_l(\xobs \spaceabs \theta),
\end{equation}
where $L_l(\xobs \spaceabs \theta)$ denotes the likelihood induced by the LF simulator. A sequential multi-fidelity SBI, which we denote as Naive-MF, can be written conceptually as
\begin{equation}\label{eq:naive-mf}
\qnaive (\theta \spaceabs x_{\mathrm{obs}}) \propto 
p_l^{\star}(\theta \spaceabs x_{\mathrm{obs}})\,L_h(x_{\mathrm{obs}} \spaceabs \theta).
\end{equation}
While Naive-MF can improve the overall time and cost efficiency of simulation by using inexpensive LF simulations to guide expensive HF inference, it can fail due to LF--HF discrepancies, as formalized by the following result. Here, we let $Z$ be the normalization constant in (\ref{eq:naive-mf}).

\begin{proposition}[Irreducible Bias under LF Miscoverage]
\label{proposition:naive}
Let $\qnaive$ be defined by (\ref{eq:naive-mf}), and define the KL divergence as
\[
E_{\mathrm{naive}}
=
\KL\!\left(
p_h^\star(\cdot \spaceabs \xobs)
\,\|\,
q_{\mathrm{naive}}(\cdot \spaceabs \xobs)
\right).
\]
Assume there exists a measurable set $A \subset \Theta$ such that
\[
p_h^\star(A \spaceabs \xobs)=\rho>0,
\qquad
p_l^\star(A \spaceabs \xobs)=\varepsilon>0,
\]
with $\varepsilon \ll \rho$, and further assume that $L_h(\xobs \spaceabs \theta)$ is upper bounded by $M$ on $A$.
Then, for the finite constant $C:=Z^{-1}M$, we have
\begin{gather}
q_{\mathrm{naive}}(A \spaceabs \xobs)
\leq C\varepsilon, \ \ \ \text{and}\\
E_{\mathrm{naive}}
\geq
\rho \log \frac{\rho}{C\varepsilon}
+
(1-\rho)\log(1-\rho).
\end{gather}
\end{proposition}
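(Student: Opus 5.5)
The plan has two parts: first bound the mass that $\qnaive$ places on $A$, then turn that bound into a lower bound on the KL divergence. For the second part we use the data-processing inequality for the two-cell partition $\{A, A^c\}$.

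\textbf{Step 1: the mass bound.} Integrate the defining relation (\ref{eq:naive-mf}) over $A$:
\[
\qnaive(A \spaceabs \xobs) = Z^{-1}\int_A p_l^\star(\theta \spaceabs \xobs)\, L_h(\xobs \spaceabs \theta)\, d\theta .
\]
On $A$ we have $L_h(\xobs \spaceabs \theta) \le M$. So the integral is at most $M\, p_l^\star(A \spaceabs \xobs) = M\varepsilon$. This gives $\qnaive(A \spaceabs \xobs) \le Z^{-1}M\varepsilon = C\varepsilon$. Finiteness of $C$ only needs $Z>0$. That holds whenever the naive posterior is well defined; in particular it holds if $p_l^\star$ and $L_h$ share positive mass, which we take as implicit in the definition.

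\textbf{Step 2: reduce KL to a binary KL.} Apply the data-processing (log-sum) inequality to the partition $\{A, A^c\}$. Write $q := \qnaive(A \spaceabs \xobs)$. Then
\[
E_{\mathrm{naive}} \ge \rho \log\frac{\rho}{q} + (1-\rho)\log\frac{1-\rho}{1-q}.
\]
If $p_h^\star$ is not absolutely continuous with respect to $\qnaive$, then $E_{\mathrm{naive}}=\infty$ and there is nothing to prove. So assume absolute continuity and apply the inequality.

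\textbf{Step 3: bound each term.} Since $1-q\le 1$, we have $-\log(1-q)\ge 0$. Hence the second term is at least $(1-\rho)\log(1-\rho)$. For the first term, Step 1 gives $q\le C\varepsilon$, so $\log(\rho/q)\ge \log(\rho/(C\varepsilon))$. Combining the two bounds yields the stated inequality.

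The only delicate point is making the binary reduction rigorous without assuming densities, including the edge case $q=0$. I would invoke the log-sum inequality on each cell, or cite monotonicity of $f$-divergences under measurable maps, here the indicator $\mathbf{1}_A$. The condition $\varepsilon\ll\rho$ is only needed to make the bound informative, since then $\rho\log(\rho/(C\varepsilon))$ is large. It is not needed for validity.
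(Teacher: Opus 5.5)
Your proposal is correct and takes essentially the same route as the paper: bound $\qnaive(A \spaceabs \xobs)$ by $Z^{-1}M\varepsilon$, then apply the log-sum inequality separately on $A$ and $A^c$, using $\qnaive(A \spaceabs \xobs)\le C\varepsilon$ for the first term and $\qnaive(A^c \spaceabs \xobs)\le 1$ for the second. Your explicit handling of the non-absolutely-continuous case and of $Z>0$ is a small tidying that the paper leaves implicit.
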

\begin{proof}
The proof is given in Appendix~A.
\end{proof}
This proposition shows that when the LF posterior underestimates an HF-plausible region, i.e., $\epsilon \ll \rho$, directly using such an LF posterior for sequential HF refinement constrains the resulting posterior on this region as $q_{\mathrm{naive}}(A \spaceabs \xobs) \leq C\epsilon$. As a result, Naive-MF cannot sufficiently recover HF-plausible probability mass once it is severely underrepresented by the LF posterior. We refer to this failure mode as LF-induced posterior miscoverage. This motivates explicit LF--HF discrepancy correction when using the LF posterior to guide HF inference.

\subsection{Bridged SBI}
\label{sec:def_bamsi_corr}
Bridged SBI explicitly separates the roles of LF and HF simulations. 
LF simulations are used to infer a coarse posterior over parameter regions, whereas limited HF simulations are used to learn a local correction that transports LF posterior samples to align with the HF simulation. We express the target posterior through an ideal correction
kernel over the LF latent variable $z_l=(\theta_l,x_l)$, where
$x_l \sim p_l(x\mid \theta_l)$:
\begin{equation}\label{eq:Target_decom}
p_h^\star(\theta_h \spaceabs \xobs ) 
= 
\int K^\star(\theta_h \spaceabs z_l, \xobs) \,
p_l(z_l \spaceabs \xobs) 
\, dz_l .
\end{equation}
Here, $K^\star(\theta_h \spaceabs z_l, \xobs)$ denotes the ideal
conditional correction distribution that transports an LF latent variable
$z_l$ to an HF-consistent parameter $\theta_h$, given the observation
$\xobs$. In Bridged SBI, we approximate this correction distribution using a parametric model $q_\phi(\theta_h \spaceabs z_l, x)$, where $x=x_h$ during training and $x=x_{\mathrm{obs}}$ at inference. 
\begin{equation}\label{eq:BAMSI_Corr}
q_{\mathrm{bridge}}(\theta_h \spaceabs x_{\mathrm{obs}})
:= 
\int
q_\phi(\theta_h \spaceabs z_l, x_{\mathrm{obs}})
\, p_l(z_l \spaceabs x_{\mathrm{obs}})
\, dz_l .
\end{equation}

We refer to $q_\phi$ as the probabilistic bridge model. 
A key component of Bridged SBI is to parameterize this bridge as an additive residual correction. 
Specifically, instead of modeling the HF parameter $\theta_h$ directly, we learn a conditional residual density 
$q_{\phi,\Delta}(\Delta \mid z_l,x)$ with $\Delta=\theta_h-\theta_l$ and obtain corrected samples by $\theta_h=\theta_l+\Delta$. 
This formulation trains a correction centered at LF-guided samples, making the bridge easier to learn when only limited HF data are available~\cite{he2016deep}. 
Details of the training and sampling process are given in Section~\ref{sec:bamsi_imp}. 

The following results clarify the theoretical role of this bridge correction.

\begin{proposition}[Bridge-Controlled Recovery Error]
\label{prop:bamsi_recovery}
Let $p_h^\star$ and $\qcorr$ be defined by
(\ref{eq:Target_decom}) and (\ref{eq:BAMSI_Corr}), respectively.
Define the KL divergence as 
\[
E_{\mathrm{bridge}}
=
\KL\!\left(
p_h^\star(\cdot \spaceabs \xobs)
\,\|\, 
q_{\mathrm{bridge}}(\cdot \spaceabs \xobs)
\right).
\]
Then
\begin{equation}
E_{\mathrm{bridge}}
\leq
\mathbb{E}_{p_l(z_l|\xobs)}
\!\left[
\KL\!\left(
K^\star(\cdot \spaceabs z_l,\xobs)
\,\|\,
q_\phi(\cdot \spaceabs z_l,\xobs)
\right)
\right].
\label{eq:bamsi_kl_bound}
\end{equation}
Furthermore, for any $A \subset \Theta$ with
$ p_h^\star(A \spaceabs \xobs)=\rho>0,$ if the average bridge error on $A$ satisfies
\begin{equation}
\mathbb{E}_{p_l(z_l \spaceabs \xobs)}
\left[
\left|
q_\phi(A \spaceabs z_l,\xobs)
-
K^\star(A \spaceabs z_l,\xobs)
\right|
\right]
\leq \eta ,
\label{eq:bamsi_set_error}
\end{equation}
then,
\begin{equation}
\qcorr (A \spaceabs \xobs)
\geq
\rho-\eta .
\label{eq:bamsi_recovery_lower}
\end{equation}
\end{proposition}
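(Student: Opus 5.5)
The plan is to treat both claims as consequences of the fact that $p_h^\star$ and $\qcorr$ are mixtures of the kernels $K^\star(\cdot \spaceabs z_l,\xobs)$ and $q_\phi(\cdot \spaceabs z_l,\xobs)$ under the \emph{same} mixing distribution $p_l(z_l \spaceabs \xobs)$. This follows from (\ref{eq:Target_decom}) and (\ref{eq:BAMSI_Corr}).

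For the KL bound (\ref{eq:bamsi_kl_bound}), I would introduce the two joint distributions on $(z_l,\theta_h)$:
\[
P(z_l,\theta_h)=p_l(z_l \spaceabs \xobs)\,K^\star(\theta_h \spaceabs z_l,\xobs),\qquad
Q(z_l,\theta_h)=p_l(z_l \spaceabs \xobs)\,q_\phi(\theta_h \spaceabs z_l,\xobs).
\]
By (\ref{eq:Target_decom}) and (\ref{eq:BAMSI_Corr}), their $\theta_h$-marginals are exactly $p_h^\star(\cdot \spaceabs \xobs)$ and $\qcorr(\cdot \spaceabs \xobs)$. The chain rule for KL, with common $z_l$-marginal, gives
\[
\KL(P\,\|\,Q)=\mathbb{E}_{p_l(z_l|\xobs)}\!\left[\KL\!\left(K^\star(\cdot \spaceabs z_l,\xobs)\,\|\,q_\phi(\cdot \spaceabs z_l,\xobs)\right)\right].
\]
The data-processing inequality (marginalization over $z_l$ cannot increase KL) then yields $E_{\mathrm{bridge}}\le \KL(P\,\|\,Q)$, which is the claim. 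Equivalently, one can invoke joint convexity of KL applied to the mixture representation. If the right-hand side is infinite there is nothing to prove.

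For the recovery bound (\ref{eq:bamsi_recovery_lower}), I would integrate the set probabilities. Fubini applies since everything is nonnegative and bounded by $1$, giving
\[
\qcorr(A \spaceabs \xobs)=\mathbb{E}_{p_l(z_l|\xobs)}\!\left[q_\phi(A \spaceabs z_l,\xobs)\right]
\quad\text{and}\quad
p_h^\star(A \spaceabs \xobs)=\mathbb{E}_{p_l(z_l|\xobs)}\!\left[K^\star(A \spaceabs z_l,\xobs)\right]=\rho .
\]
Subtracting these, and using $\mathbb{E}[f]\ge -\mathbb{E}|f|$ together with assumption (\ref{eq:bamsi_set_error}), gives $\qcorr(A \spaceabs \xobs)-\rho\ge -\eta$.

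Neither step is deep. The only point requiring care is measure-theoretic. We must ensure that $K^\star$ and $q_\phi$ are genuine Markov kernels, so that the joints are well-defined and Fubini and the chain rule apply, and we need absolute continuity conventions in the KL chain rule. I would state these as standing regularity assumptions and then cite the standard chain rule and data-processing inequality.
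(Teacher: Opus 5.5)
Your proposal is correct. The second part is the paper's argument: Tonelli writes both set probabilities as $p_l(z_l\spaceabs\xobs)$-averages of the kernel set probabilities, and then the integral triangle inequality finishes. The paper records the two-sided bound $|\qcorr(A\spaceabs\xobs)-\rho|\le\eta$, of which you only need the lower half.

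For the KL bound the routes differ. The paper writes $p_h^\star=\mathbb{E}_{z_l}[K^\star]$ and $\qcorr=\mathbb{E}_{z_l}[q_\phi]$, then invokes joint convexity of KL together with Jensen's inequality in one line. You instead lift to joint laws $P,Q$ on $(z_l,\theta_h)$ with a common $z_l$-marginal. The chain rule turns $\KL(P\,\|\,Q)$ into the expected conditional KL, and data processing under marginalization gives the bound. These are equivalent standard facts, as you note yourself. The paper's route is shorter.

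Your route has two small advantages.
\begin{itemize}
\item It handles an integral mixture directly. The paper's version needs joint convexity to pass from finite to integral mixtures, a step it leaves implicit.
\item Applying the chain rule in the other order identifies the slack in the KL bound (when finite) as $\mathbb{E}_{p_h^\star}\bigl[\KL\bigl(P(\cdot\spaceabs\theta_h)\,\|\,Q(\cdot\spaceabs\theta_h)\bigr)\bigr]$. This is the mismatch between the backward conditionals of $z_l$ given $\theta_h$ under the ideal and learned bridges, so it tells you exactly when the bound is tight.
\end{itemize}
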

\begin{proof}
The proof is given in Appendix~A.
\end{proof}

Thus, unlike Naive-MF, whose mass on $A$ is constrained by the LF posterior mass, Bridged SBI can recover the HF posterior mass on $A$ up to the average conditional bridge approximation error $\eta$.

\subsection{Residual Bridge Learning and Posterior Sampling}\label{sec:bamsi_imp}

This section describes how $\qcorr$ in (\ref{eq:BAMSI_Corr}) is implemented using an additive residual bridge, how the bridge is trained online with limited HF simulations, and how LF posterior samples are corrected. 
The overall procedure is summarized in Algorithm~\ref{alg:bamsi}. 
We also describe an optional HF refinement step that constructs $\qrefined$, which further sharpens $\qcorr$ when additional HF budget is available.

\vspace{5pt}
\noindent \textbf{Training the LF estimator:}
We first train neural SBI to obtain an LF posterior $\hat{p}_l(\theta\spaceabs x)$. In this paper, we use neural posterior estimation (NPE)~\cite{Papamakarios2016-ki, Lueckmann2017-sa, Greenberg2019-on}.

\vspace{5pt}
\noindent \textbf{Residual bridge training:}
We collect a small number of HF simulations around the LF-guided parameter region.
HF parameters are drawn from a sampling distribution $r_h(\theta\spaceabs\xobs)$ constructed from the LF posterior  $\hat{p}_l(\theta\spaceabs\xobs)$, for example, by perturbing or interpolating its samples. The simplest choice is to set $r_h(\theta\spaceabs\xobs)=\hat{p}_l(\theta\spaceabs\xobs)$. For each  $\theta_h\sim r_h(\theta\spaceabs\xobs)$, we simulate $x_h\sim p_h(x\spaceabs\theta_h)$ and pair it with LF variables 
$z_l=(\theta_l,x_l)$ resampled from the LF dataset according to weights induced by $\hat{p}_l(\theta_l\spaceabs\xobs)$. 

The bridge is parameterized by a neural conditional density estimator and trained as 
\begin{equation}
q_\phi(\theta_h\spaceabs z_l,x)
=
q_{\phi,\Delta}(\theta_h-\theta_l\spaceabs z_l,x),
\end{equation}
where $x=x_h$ during training and $x=\xobs$ during inference. This formulation is not tied to a specific density model: Any neural network capable of approximating the conditional density can be used.  The training data are collected conditional on $\xobs$, so this residual model learns the local LF--HF residual structure from LF-consistent to HF-consistent samples around $\xobs$.

\vspace{5pt}
\noindent \textbf{Inference with Bridged SBI:}
Sampling from $\qcorr$ follows (\ref{eq:BAMSI_Corr}). 
In our implementation, we use the Dirac observation projection
$p(x_l\spaceabs\xobs):=\delta(x_l-\xobs)$. 
Under this projection, the LF latent variable $z_l\sim p_l(z_l\spaceabs\xobs)$ is sampled by setting
\[
x_l=\xobs,
\qquad
\theta_l\sim \hat{p}_l(\theta_l\spaceabs\xobs).
\]
The bridge model then maps this LF posterior sample toward an HF-consistent sample by applying a residual correction
\[
\Delta \sim q_{\phi,\Delta}(\Delta\spaceabs \theta_l,x_l,\xobs),
\qquad
\theta_{\mathrm{corr}} :=\theta_l+\Delta .
\]
Here, $\theta_{\mathrm{corr}}$ denotes the corrected parameter sample used as
a sample from the bridged posterior $q_{\mathrm{bridge}}(\theta \mid x_{\mathrm{obs}})$. 

The Dirac projection used here is a simple deterministic choice. More generally, the LF latent distribution $p_l(z_l\spaceabs\xobs)$ can be  approximated as
\begin{equation}
p_l(\theta_l\spaceabs x_l,\xobs)\,p(x_l\spaceabs\xobs)
\approx
\hat{p}_l(\theta_l\spaceabs x_l)\,p(x_l\spaceabs\xobs),
\end{equation}
where we use the independence assumption $\theta_l \perp\!\!\!\perp \xobs \spaceabs x_l$. If there is available domain knowledge on how LF and HF observations differ, this extension can improve robustness at the observation level, beyond parameter-level correction alone. In this paper, we use the Dirac projection, but other choices, such as a spike-and-slab distribution~\cite{Ward2022-fq}, could also be used.

\vspace{5pt}
\noindent \textbf{Optional HF refinement:}
When additional HF simulations are available, we can further refine $\qcorr$ by treating it as an informed prior:
\begin{equation}
\qrefined(\theta\spaceabs\xobs)
\propto
\qcorr(\theta\spaceabs\xobs)L_h(\xobs\spaceabs\theta).
\end{equation}
In practice, $\qrefined$ is obtained by training an additional NPE on HF simulations with parameters sampled from $\qcorr(\theta\spaceabs\xobs)$, and equivalently by using the corrected samples $\theta_{\mathrm{corr}}$. This optional refinement step further sharpens the corrected posterior after the bridge correction has mitigated LF-induced miscoverage.

\section{Sim-to-Sim Experiment}\label{sec:s2s}
\begin{algorithm}[tb]
\footnotesize
\caption{Pipeline of Bridged SBI}
\label{alg:bamsi}
\DontPrintSemicolon

\KwIn{Prior $p(\theta)$, Observation $\xobs$, Simulators $p_l(x|\theta), p_h(x|\theta)$, Budgets $(N_l,N_h,N_k)$}
\KwOut{Corrected samples $\{\theta_{\mathrm{corr}}^{(m)}\}_{m=1}^{M}$, Optional refined posterior $\qrefined$}

\BlankLine
\tcp{\textbf{Stage 1: LF Posterior Estimation}}
Generate LF dataset $\mathcal{D}_l=\{(\theta_l^{(i)},x_l^{(i)})\}_{i=1}^{N_l}$ where $\theta_l^{(i)}\sim p(\theta)$ and $x_l^{(i)}\sim p_l(x|\theta_l^{(i)})$\;
Train LF posterior estimator $\hat p_l(\theta|x)$ on $\mathcal{D}_l$ via NPE\;

\BlankLine
\tcp{\textbf{Stage 2-1: LF-Guided HF Sampling}}
Construct local HF proposal $r_h(\theta|\xobs)$ from $\hat p_l(\theta|\xobs)$\;
Collect limited HF samples $\mathcal{D}_h = \{(\theta_h^{(j)},x_h^{(j)})\}_{j=1}^{N_h}$ where $\theta_h^{(j)}\sim r_h(\theta|\xobs)$ and $x_h^{(j)}\sim p_h(x|\theta_h^{(j)})$\;

\BlankLine
\tcp{\textbf{Stage 2-2: LF-HF Bridge Dataset Collection}}
Resample $\tilde{\mathcal{D}}_l=\{(\theta_l^{(k)},x_l^{(k)})\}_{k=1}^{N_k}$ from $\mathcal{D}_l$ using weights from $\hat p_l(\theta_l|\xobs)$\;
Form mixed pairs $\mathcal{P} = \{(\theta_l^{(k)},x_l^{(k)},x_h^{(j)},\Delta^{(j,k)})\}$ where $\Delta^{(j,k)}=\theta_h^{(j)}-\theta_l^{(k)}$\;

\BlankLine
\tcp{\textbf{Stage 2-3: Residual Bridge Training}}
Train a neural conditional density estimator $q_{\phi,\Delta}$ to minimize: 
$\mathcal{L}_{\mathrm{bridge}}(\phi) = -\frac{1}{|\mathcal{P}|} \sum_{\mathcal{P}} \log q_{\phi,\Delta}(\Delta|\theta_l,x_l,x_h)$\;

\BlankLine
\tcp{\textbf{Stage 3: Inference with Bridged SBI}}
\For{$m=1$ \KwTo $M$}{
  Sample LF parameter $\theta_l^{(m)}\sim\hat p_l(\theta|\xobs)$ and set $x_l^{(m)} \leftarrow \xobs$ \tcp*{Dirac proj.}
  Predict residual $\Delta^{(m)} \sim q_{\phi,\Delta}(\Delta|\theta_l^{(m)},x_l^{(m)},\xobs)$\;
  Apply correction $\theta_{\mathrm{corr}}^{(m)} = \theta_l^{(m)}+\Delta^{(m)}$\;
}

\BlankLine
\tcp{\textbf{[Optional] HF Refinement}}
Train HF estimator $\qrefined$ on sampling distribution $\qcorr$.\;

\BlankLine
\Return{$\{\theta_{\mathrm{corr}}^{(m)}\}_{m=1}^{M}$ and $\qrefined$}
\end{algorithm}

We used a robot simulator equipped with a particle model. With earthwork applications in mind, we developed a simulation with an excavator model (Fig.~\ref{fig:excavator_fidelity}).

\subsection{Simulation Model}

We simulate a particle-pouring task with an earthwork excavator in Isaac Gym~\cite{Makoviychuk2021IsaacGym}. In the LF simulator, the excavator's bucket contains 90 cubes, each with an edge length of 10~cm, providing fast but coarse particle dynamics. In the HF simulator, the particle's edge length is reduced to 2.5~cm. 
Since the number of particles scales with the inverse cube of the particle size, this yields
$90\times4^3=5760$ particles. In both simulators, the excavator bucket is initialized 1.8~m above the ground and rotated at a fixed angular velocity to execute the pouring motion. After the motion is completed, a depth image of the resulting particle pile is captured and used as the observation for inference. The image is taken from a fixed 3.6~m $\times$ 3.6~m square region centered at a projection of the bucket link onto the ground plane.

Table~\ref{tab:simulation_time} summarizes the computational costs of the two simulators. Generating 1000 LF simulations requires 5 min 50 s, whereas generating 1000 HF simulations requires 16 h 10 min, corresponding to an approximately $166\times$ speedup. Experiments were conducted on a workstation equipped with an Intel Core i9-11900K CPU, 32 GB RAM, and an NVIDIA GeForce RTX 3080 GPU with 10 GB of VRAM.

\subsection{Parameterization and Ground-Truth Settings}

The parameter vector is defined as $\theta = (\theta_f, \theta_{rf}, \theta_{res})$, where $\theta_f$, $\theta_{rf}$, and $\theta_{res}$ denote the static friction, rolling friction, and restitution coefficients of the particles, respectively. These parameters were selected because they govern the dominant particle-level contact behaviors, such as sliding resistance, rolling resistance, and rebound behavior. Motivated by an earlier work \cite{Matl2020-yt} that estimated particle-level parameters in Isaac Gym, we constrain each parameter to $[0,1]$ and assume a uniform prior over this domain during the initial LF exploration stage.

In the sim-to-sim experiments, we consider two synthetic target particles: \textbf{Particle-A} with $\theta^{\mathrm{true}}_A=(0.7, 0.7, 0.3)$ and \textbf{Particle-B} with $\theta^{\mathrm{true}}_B=(0.2, 0.2, 0.3)$. For both cases, the observation is a depth image generated by the HF simulator. The ground-truth parameters are used only for data generation and evaluation, so they are not available during inference.

\subsection{Experimental Setup}

\noindent \textbf{Network architecture:}
 Observations are represented as $250 \times 250$ depth images. NPE uses a CNN-MDN architecture, where a CNN encoder extracts image embeddings and an MDN~\cite{Bishop1994-cl} outputs a $k$-component Gaussian mixture with $k=10$. The bridge model also uses an MDN to model the conditional density over $\Delta$. The CNN encoder consists of two convolutional/max-pooling blocks followed by three fully connected layers with sizes $(120, 128, 128)$ and Tanh activations. All models are trained for 1500 iterations with a batch size of 30.

\begin{table}[tb]
\renewcommand{\arraystretch}{1.3}
\captionsetup{font=footnotesize}
\caption{Wall-clock time required to generate 1000 simulation samples in sim-to-sim experiment. HF simulations were generated using 250 batches of 4 parallel environments, while LF simulations were generated using 10 batches of 100 parallel environments.}
\label{tab:simulation_time}
\centering
\small
\begin{tabular}{ccc}
\toprule
Time at high fidelity & Time at low fidelity & Speedup \\
\midrule
16 h 10 m 45 s & 5 m 50 s & $166\times$ \\
\bottomrule
\end{tabular}
\end{table}

\begin{figure}[tb]
\centering 
\includegraphics[width=0.8\columnwidth]{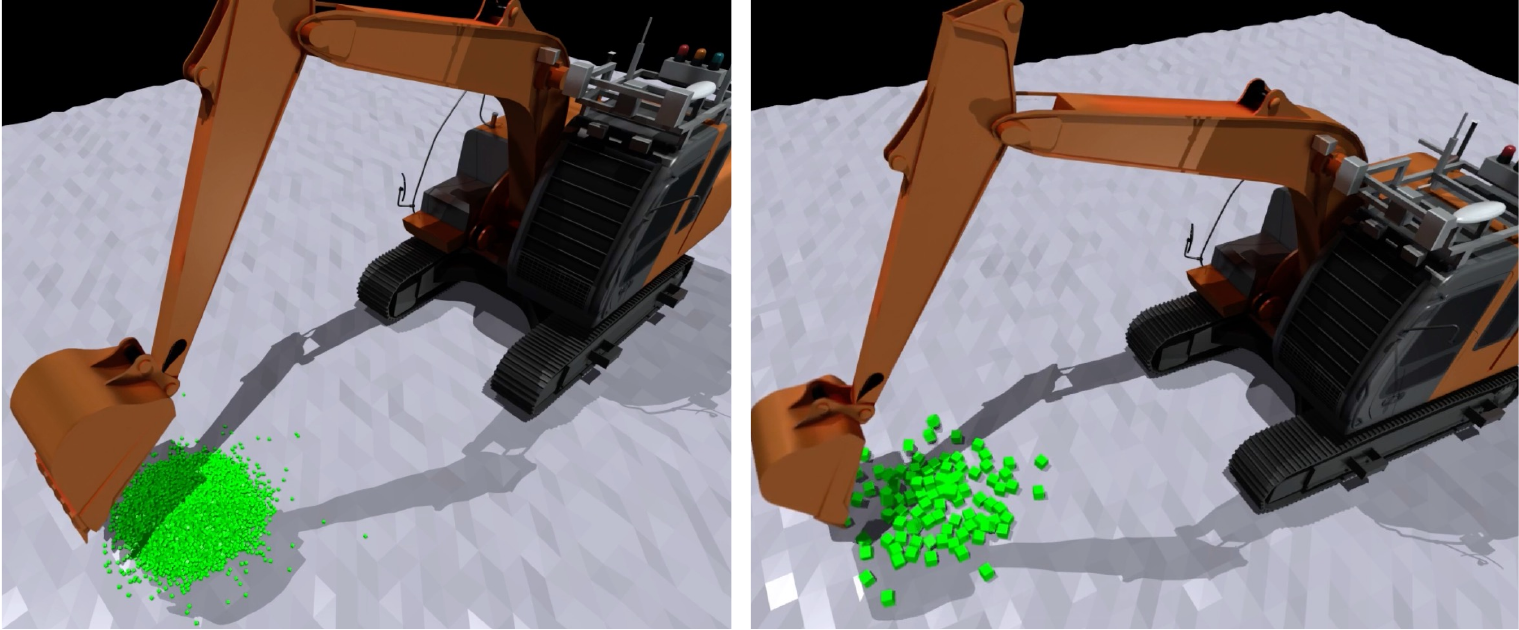}
\captionsetup{font=footnotesize}
\caption{Particle pouring simulation with varying particle sizes. LF simulation uses cubes with 10-cm side length, whereas HF simulation uses cubes with 2.5-cm side length.}
\label{fig:excavator_fidelity}
\vspace{-7pt}
\end{figure}

\vspace{3pt} 
\noindent \textbf{Baseline and cost allocation: }
We compare four methods. 
\begin{itemize}
\item Naive-MF: Sequential multi-fidelity SBI that uses the LF posterior directly as the prior for the HF stage. 
\item Bridged SBI (proposed): Corrected posterior $\qcorr$ obtained via the proposed residual bridge correction. 
\item Bridged w/ Refine (proposed): Optional refined posterior $\qrefined$ built on top of Bridged SBI. 
\item HF-only: NPE trained using only HF simulations.
\end{itemize}
Simulation cost is measured in HF-equivalent units, where one HF simulation has cost $N=1$. 
According to Table~\ref{tab:simulation_time}, 1000 LF simulations correspond to $N=6$. 
For all multi-fidelity methods, the LF posterior is trained with 1000 LF simulations. 
For Bridged w/ Refine, we use 1000 LF simulations for LF posterior estimation and 14 HF simulations for bridge learning, yielding a total pre-refinement cost of $N=20$.

\subsection{Evaluation}
For quantitative evaluation, we train NPE using extensive HF simulations, approximately $N=4000$, and use it as the HF reference posterior, denoted by $p_{\mathrm{ref}}$. This can be considered a reasonable reference, since the aim of multi-fidelity inference is to replace expensive single-fidelity HF inference. We report both $D_{\mathrm{KL}}(p_{\mathrm{ref}}|q)$ and $D_{\mathrm{KL}}(q|p_{\mathrm{ref}})$, where $q$ denotes the estimated posterior. Both divergences are estimated using 300 samples drawn from each posterior and computed via finite summation. These metrics are complementary: Forward KL evaluates coverage of the HF reference posterior, whereas reverse KL evaluates posterior concentration by penalizing probability mass outside the reference high-density region. All reported results are averaged over 10 independent trials.

\begin{figure}[tb]
\centering 
\includegraphics[width=0.80\columnwidth]{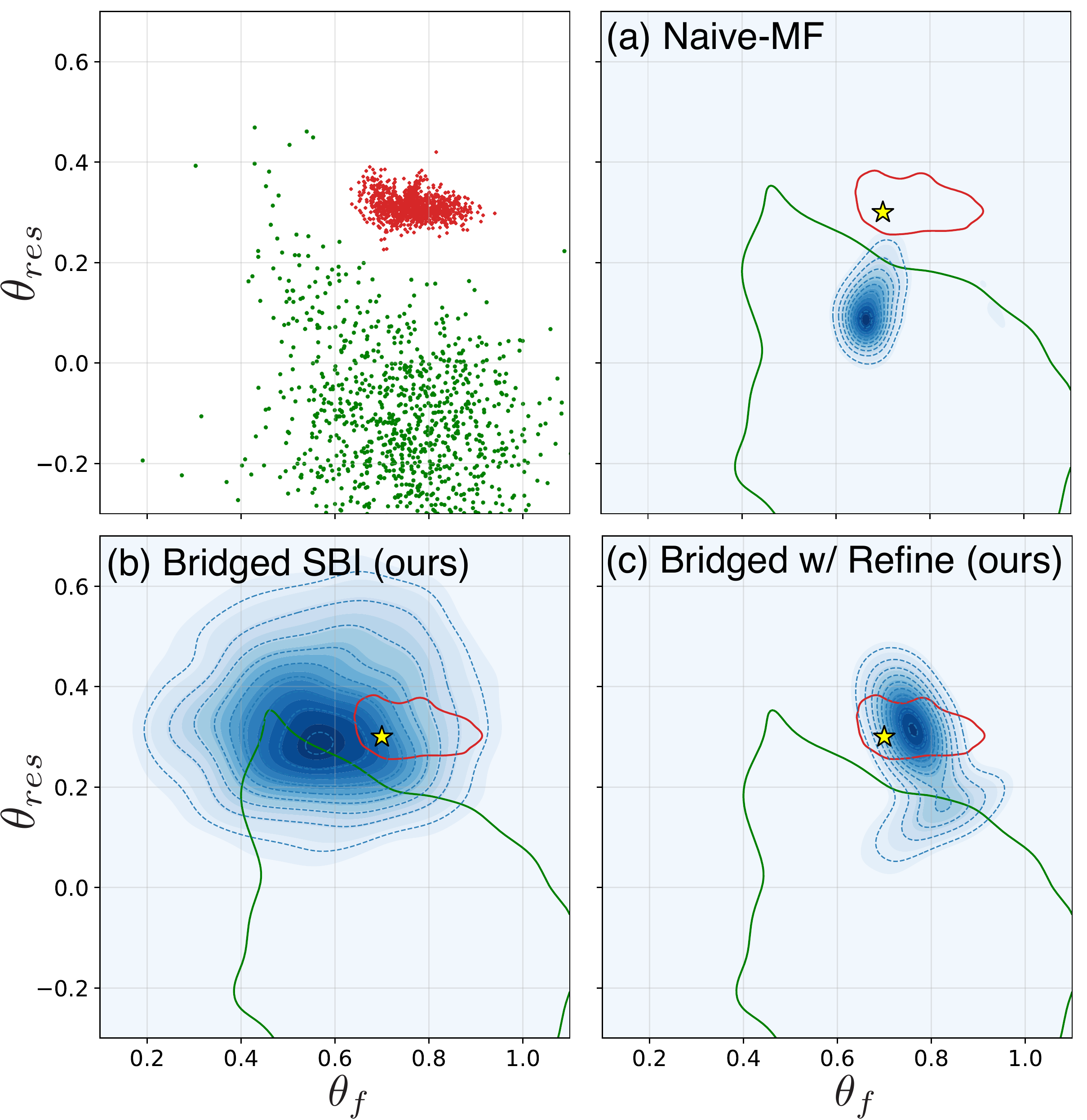}
\captionsetup{font=footnotesize}
\caption{Resulting posteriors for sim-to-sim experiment (Particle-A) at total cost $N=50$, shown as 2D marginals over $(\theta_f, \theta_{res})$. Top-left: Reference HF (red) and LF (green) posterior samples. Solid contours denote $95\%$ high-density regions, and star indicates ground-truth parameter. Naive-MF is biased toward the LF posterior, whereas Bridged SBI recovers HF-consistent regions. Optional refinement further improves posterior concentration.}
\label{fig:s2s_post}
\vspace{-7pt}
\end{figure}

\subsection{Results}

Fig.~\ref{fig:s2s_post} shows the inferred posteriors for Particle-A at a total simulation cost of $N=50$, visualized as two-dimensional marginals over $(\theta_f, \theta_{res})$. The top-left panel presents the reference LF and HF posteriors obtained from large-scale single-fidelity data, and it shows a clear mismatch in the parameter regions. Naive-MF remains confined to the LF-indicated region and fails to recover the HF posterior region containing the ground-truth parameter, providing empirical evidence of LF-induced posterior miscoverage consistent with Proposition~\ref{proposition:naive}. 
In contrast, Bridged SBI recovers posterior mass in the missing HF region, indicating that the bridge model corrects regions that would otherwise be excluded by the LF posterior. 
Bridged w/ Refine further sharpens the corrected posterior while maintaining agreement with the HF reference, highlighting the complementary roles of correction for coverage recovery and refinement for posterior concentration.

Fig.~\ref{fig:s2s_kl_rkl} reports the KL and reverse KL divergence for Particle-A and Particle-B across simulation costs $N$. In both particle types, Naive-MF yields substantially larger KL values than Bridged SBI, indicating that directly using the LF posterior for the HF stage fails to recover important HF-relevant posterior mass. 
In the low-cost regime, Bridged SBI clearly outperforms both Naive-MF and HF-only in terms of KL divergence. 
For example, at $N=7$ for Particle-A, Bridged SBI achieves a mean KL divergence of 4.91, compared with 25.50 for Naive-MF and 24.36 for HF-only, corresponding to an 80.7\% reduction relative to Naive-MF. 
These results show that the residual bridge enables cost-efficient HF posterior inference with reliable coverage of the HF reference posterior under scarce simulation budgets. 
In terms of reverse KL, Bridged SBI remains stable in the low-cost regime, while Bridged w/ Refine substantially reduces reverse KL as additional HF evaluations become available. 
For example, at $N=100$ for Particle-A, Bridged w/ Refine achieves a mean reverse KL of 2.43, whereas Bridged SBI and HF-only are both over 7. 
These results indicate the complementary roles of bridge correction and refinement: Correction recovers posterior mass covering HF parameter candidates, while refinement further sharpens the posterior after coverage has been restored.

\begin{figure}[tb]
    \centering
    \includegraphics[width=0.95\columnwidth]{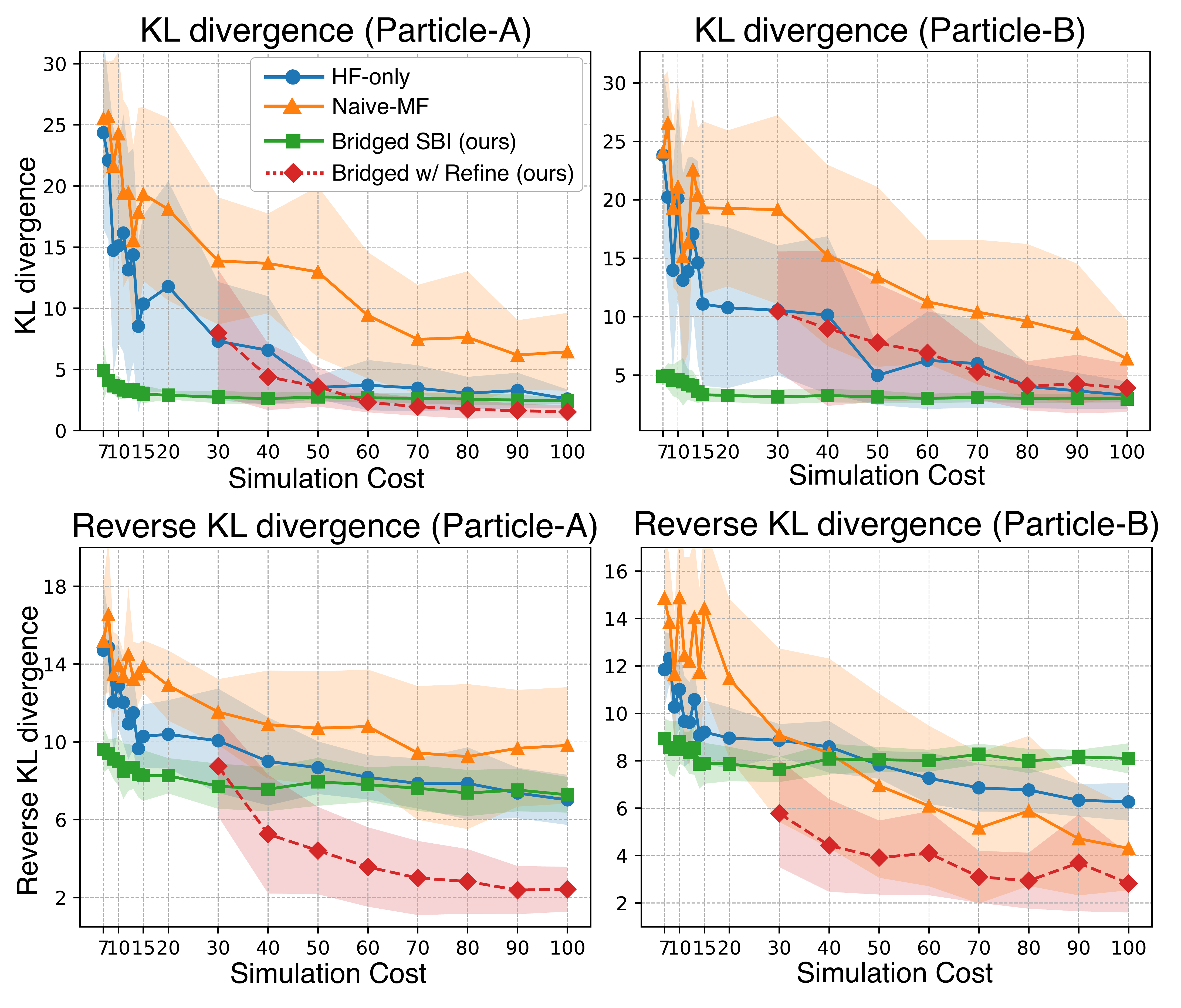}
    \captionsetup{font=footnotesize}
    \caption{KL and reverse KL divergences across simulation costs for sim-to-sim experiments. Upper and lower rows show KL and reverse KL divergences, respectively, for Particle-A and Particle-B. Bridged SBI reduces both divergences compared with HF-only and Naive-MF, especially in low-cost regimes, indicating effective recovery of target-consistent regions under limited budgets. Optional refinement further reduces reverse KL divergence at higher costs, improving posterior concentration.}\label{fig:s2s_kl_rkl}
\vspace{-11pt}
\end{figure}

\section{Real-to-Sim Experiment}

We next evaluate Bridged SBI in a real-to-sim setting, where the goal is to estimate simulator parameters corresponding to the real soil observation.

\subsection{Observation Acquisition and Simulation Model}
The real system (Fig.~\ref{fig:real_excavator}) uses a real excavator corresponding to the one modeled for the simulator used in Section~\ref{sec:s2s}, and it performs the same pouring action. Before pouring, we load 0.09~{\upshape m}$^3$ of soil into the excavator bucket. After pouring, we capture a 3D point cloud of the real soil pile using a Leica BLK2GO scanner. The point cloud is then converted into a mesh and further transformed into a 2D height map, as shown in Fig.~\ref{fig:real_soil_data}, which serves as the observation $\xobs$. The captured area is the same as that used in the sim-to-sim experiments: a square region of 3.6~m $\times$ 3.6~m centered at the bucket link projection on the ground plane. 

We use the same LF and HF simulation settings as in the sim-to-sim experiment, while changing the observation source from simulated to real soil data.

 \begin{figure}[tb]
\centering 
\includegraphics[width=0.75\columnwidth]{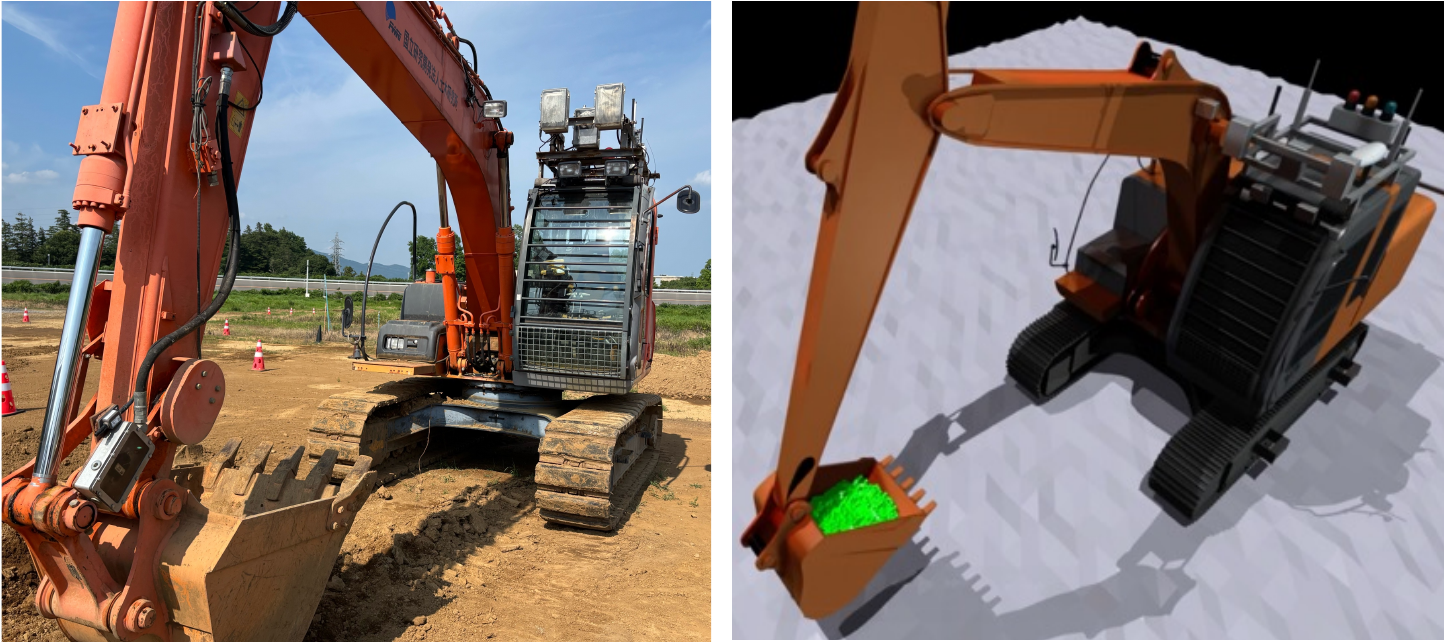}
\captionsetup{font=footnotesize}
\caption{Real and simulated excavator environments. Left: Real excavator performing soil manipulation. Right: Corresponding simulation setup for parameter estimation.}
\label{fig:real_excavator}
\vspace{-7pt}
\end{figure}

\begin{figure}[tb]
\centering 
\includegraphics[width=0.90\columnwidth]{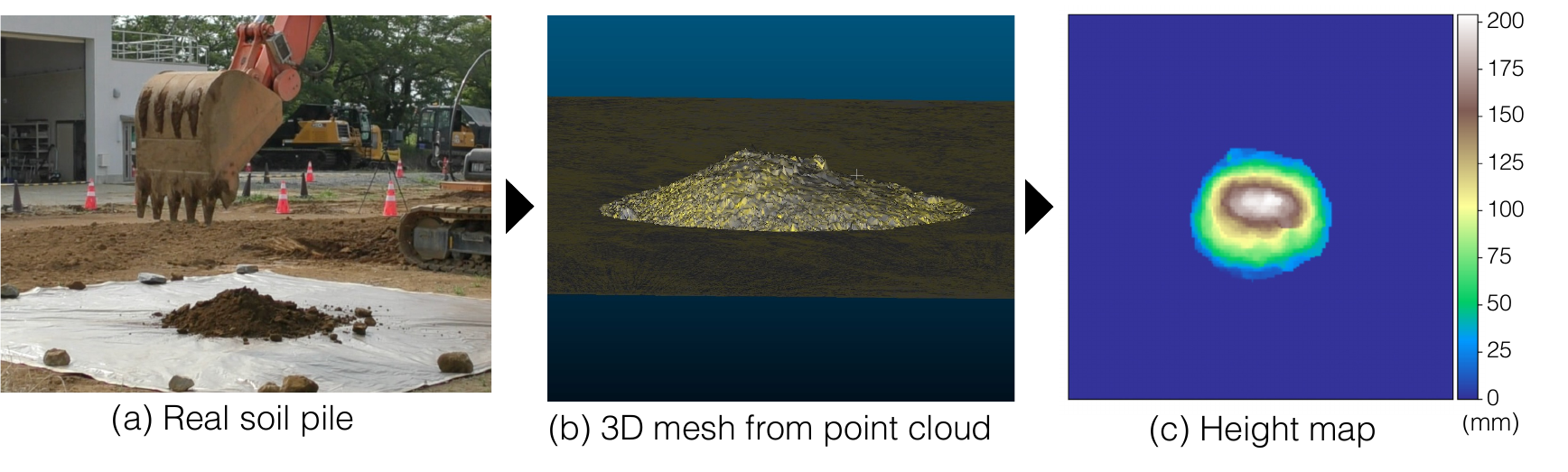}
\captionsetup{font=footnotesize}
\caption{Pipeline of real-to-sim data processing. (a) Soil pouring using an excavator, (b) acquisition of a 3D point cloud and reconstruction of a mesh, and (c) transformation into a 2D height map consistent with the simulation output format.}
\label{fig:real_soil_data}
\vspace{-7pt}
\end{figure}

\subsection{Parameterization}

In addition to the three parameters used in Section~\ref{sec:s2s}, we introduce a parameter to account for the void space between particles. The real soil volume of 0.09~{\upshape m}$^3$ nominally corresponds to 90 cubes with a side length of 10~cm in the LF simulator, or to 5760 cubes with a side length of 2.5~cm in the HF simulator. However, because simulated particle piles contain void space, fewer cubes than these nominal counts are required to reproduce the real soil pile. Therefore, we introduce a scaling parameter $\theta_{num} \in [0.5,1]$ that controls the number of particles, where the lower bound is set to avoid overly sparse particle configurations that no longer preserve the overall pile shape. Accordingly, the LF simulator uses $\theta_{num} \times 90$ cubes, and the HF simulator uses $\theta_{num} \times 5760$ cubes. The full parameter vector is defined as
\[
\theta = (\theta_{f}, \theta_{rf}, \theta_{res}, \theta_{num}),
\]
where $\theta_{f}$, $\theta_{rf}$, and $\theta_{res}$ are defined in Section~\ref{sec:s2s}. The first three parameters lie in $[0,1]$, while $\theta_{num}$ lies in $[0.5,1]$.

\begin{figure}[tb]
\centering 
\includegraphics[width=0.80\columnwidth]{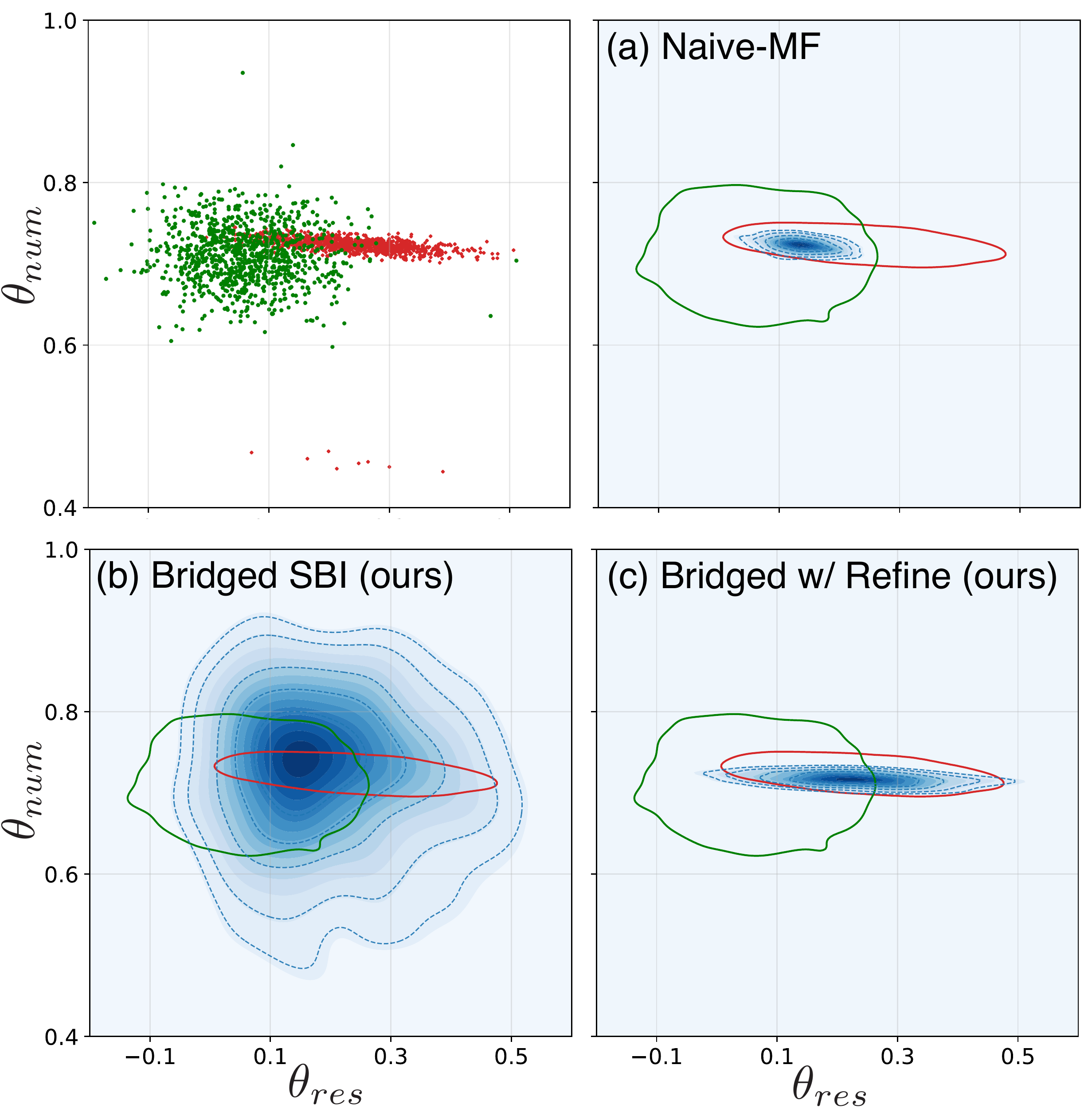}
\captionsetup{font=footnotesize}
\caption{Resulting posteriors for real-to-sim experiment at total simulation cost of $N=50$, shown as 2D marginals over $(\theta_{res}, \theta_{num})$. Top-left: Reference HF (red) and LF (green) posterior samples. Solid contours denote $95\%$ high-density regions. Naive-MF is biased toward LF posterior, whereas Bridged SBI recovers HF-consistent regions. Optional refinement further improves posterior concentration. }
\label{fig:r2s_post}
\vspace{-7pt}
\end{figure}

\subsection{Experimental Setup}
We use the same network architecture, baselines, and budget allocation settings as described in Section~\ref{sec:s2s}.

\subsection{Evaluation}
We train NPE on HF simulations ($N=8000$) and use it as the HF reference posterior. Following the sim-to-sim setting, we estimate forward and reverse KL divergences by finite summation over 300 samples from each posterior.

\subsection{Results}
Fig.~\ref{fig:r2s_post} shows the inferred posteriors at a total simulation cost of $N=50$, visualized as two-dimensional marginals over $(\theta_{res}, \theta_{num})$. The LF and HF reference posteriors exhibit a clear mismatch in the parameter regions. Naive-MF remains constrained to the LF-supported region but fails to recover HF-consistent regions, whereas Bridged SBI restores missing posterior mass and improves alignment with the HF reference. Bridged SBI w/ Refine further sharpens the posterior after correction.

Fig.~\ref{fig:r2s_kl_rkl} reports KL and reverse KL divergence across simulation cost $N$. In the low-budget regime, Bridged SBI achieves the best performance in both metrics. At $N=7$, Bridged SBI obtains a KL divergence of $2.79 \pm 0.77$, compared with $11.19 \pm 6.80$ for Naive-MF and $6.53 \pm 3.62$ for HF-only, corresponding to approximately 75\% and 67\% lower KL values, respectively. As the budget increases, HF-only improves KL and Reverse KL, which is consistent with Bridged SBI being designed primarily for inference in low-budget regimes. When additional HF evaluations are available, Bridged SBI w/ Refine further improves posterior concentration. Overall, the real-to-sim results support our main claim: Reliable posterior inference for particle-parameter calibration requires explicit LF--HF discrepancy correction, rather than treating the LF posterior as a trusted approximation and simply refining it with additional HF simulations.

\section{Discussion} 
The results show that Bridged SBI improves the trade-off between simulation cost and posterior accuracy by using LF simulations for coarse posterior estimation and limited HF simulations for LF--HF discrepancy correction. 

One limitation is that the residual bridge is learned locally around the LF-guided parameter regions. Consequently, the effectiveness of this approach depends on whether a meaningful local LF--HF correction can be learned in this region. If the LF posterior is severely biased or no local cross-fidelity relationship exists, the learned bridge may not fully recover the target posterior. This limitation could be mitigated by increasing exploration around LF posterior samples, using smoothed or broadened sampling distributions~\cite{Giannoukou2025}, or adopting more expressive conditional density models for the residual bridge~\cite{wildberger2023flow}. 

Another limitation is the use of a simple observation projection when sampling LF latent variables. More informative projection models could be used when domain knowledge on the LF and HF observation spaces is available, but manually designing such models may introduce misspecification~\cite{Ward2022-fq, Kennedy2001}. Future work will study how to jointly learn observation projection and parameter correction from simulation data.

\begin{figure}[tb]
\centering 
\includegraphics[width=0.80\columnwidth]{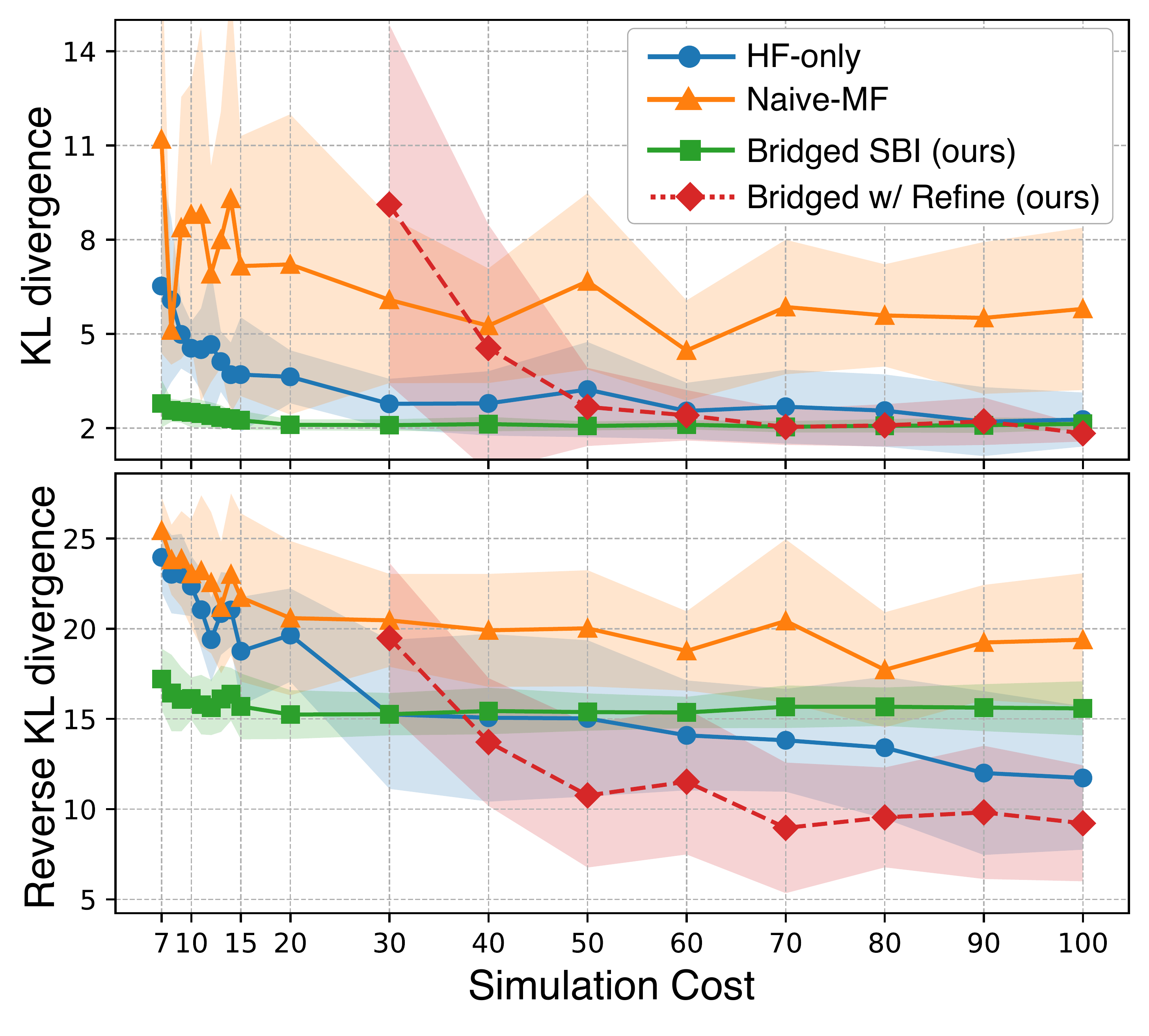}
\captionsetup{font=footnotesize}
\caption{KL and reverse KL divergences across simulation costs for real-to-sim experiment. Bridged SBI achieves lower KL divergence, especially in low-cost regimes, indicating effective recovery of target-consistent regions under limited budgets. Optional refinement further reduces reverse KL divergence at higher costs, improving posterior concentration.}
\label{fig:r2s_kl_rkl}
\vspace{-9pt}
\end{figure}

\section{Conclusion}
In this paper, we proposed Bridged SBI, a cost-efficient calibration framework for particle-based simulators that performs high-fidelity inference through low-fidelity guidance and residual correction. Rather than treating the LF posterior as a trusted guide for sequential refinement, Bridged SBI instead uses it as an informative---albeit biased---guide and then corrects LF posterior samples with a locally learned residual bridge trained from a small number of HF simulations. Our analysis showed that sequential multi-fidelity SBI can suffer from LF-induced posterior miscoverage due to the LF--HF discrepancy. Bridged SBI is designed to alleviate this issue by explicitly modeling the LF--HF discrepancy using the residual bridge. Experiments in sim-to-sim and real-to-sim calibration demonstrated that Bridged SBI could improve HF inference under limited HF simulation budgets.

\bibliographystyle{IEEEtran}
\bibliography{reference}

@article{CUNDALL1979,
  author  = {Cundall, Peter A. and Strack, Otto D. L.},
  title   = {A Discrete Numerical Model for Granular Assemblies},
  journal = {Geotechnique},
  volume  = {29},
  number  = {1},
  pages   = {47--65},
  year    = {1979},
}

@article{COETZEE2017104,
  author  = {Coetzee, C. J.},
  title   = {Review: Calibration of the Discrete Element Method},
  journal = {Powder Technology},
  volume  = {310},
  pages   = {104--142},
  year    = {2017},
}

@article{Cranmer2020-cc,
  author  = {Cranmer, Kyle and Brehmer, Johann and Louppe, Gilles},
  title   = {The Frontier of Simulation-Based Inference},
  journal = {Proceedings of the National Academy of Sciences of the United States of America},
  volume  = {117},
  number  = {48},
  pages   = {30055--30062},
  year    = {2020},
}

@article{Bierwisch2009,
  author  = {Bierwisch, C. and Kraft, T. and Riedel, H. and Moseler, M.},
  title   = {Three-Dimensional Discrete Element Models for the Granular Statics and Dynamics of Powders},
  journal = {Journal of the Mechanics and Physics of Solids},
  volume  = {57},
  number  = {1},
  pages   = {10--31},
  year    = {2009},
  doi     = {10.1016/j.jmps.2008.10.006}
}

@article{dufresne2022energy,
  author  = {Dufresne, Yann and Boulet, Mica{\"e}l and Moreau, St{\'e}phane},
  title   = {Energy dissipation and onset of instabilities in coarse-grained discrete element method on homogeneous cooling systems},
  journal = {Physics of Fluids},
  volume  = {34},
  number  = {3},
  pages   = {033306},
  year    = {2022},
  doi     = {10.1063/5.0081413}
}

@article{ROESSLER201858,
  author  = {Roessler, Thomas and Katterfeld, Andr{\'e}},
  title   = {Scaling of the Angle of Repose Test and Its Influence on the Calibration of {DEM} Parameters Using Upscaled Particles},
  journal = {Powder Technology},
  volume  = {330},
  pages   = {58--66},
  year    = {2018}
}

@ARTICLE{kadokawa2025,
  author={Kadokawa, Yuki and Tahara, Hirotaka and Matsubara, Takamitsu},
  journal={IEEE Transactions on Automation Science and Engineering}, 
  title={Progressive-Resolution Policy Distillation: Leveraging Coarse-Resolution Simulations for Time-Efficient Fine-Resolution Policy Learning}, 
  year={2025},
  volume={22},
  number={},
}

@inproceedings{Ramos2019-oy,
  author    = {Ramos, Fabio and Possas, Rafael and Fox, Dieter},
  title     = {{BayesSim}: Adaptive Domain Randomization via Probabilistic Inference for Robotics Simulators},
  booktitle = {Proceedings of Robotics: Science and Systems},
  year      = {2019},
  doi       = {10.15607/RSS.2019.XV.029}
}

@article{gahee2025-sbi,
  author  = {Kim, Gahee and Matsubara, Takamitsu},
  title   = {{ASBI}: Leveraging Informative Real-World Data for Active Black-Box Simulator Tuning},
  journal = {Applied Intelligence},
  volume  = {55},
  number  = {1028},
  year    = {2025},
}

@inproceedings{Muratore2022-cy,
  author    = {Muratore, Fabio and Gruner, Theo and Wiese, Florian and Belousov, Boris and Gienger, Michael and Peters, Jan},
  title     = {Neural Posterior Domain Randomization},
  booktitle = {Proceedings of the 5th Conference on Robot Learning},
  volume    = {164},
  pages     = {1532--1542},
  year      = {2022}
}

@inproceedings{Papamakarios2016-ki,
  author    = {Papamakarios, George and Murray, Iain},
  title     = {Fast $\epsilon$-Free Inference of Simulation Models with {Bayesian} Conditional Density Estimation},
  booktitle = {Advances in Neural Information Processing Systems},
  volume    = {29},
  pages     = {1028--1036},
  year      = {2016}
}

@inproceedings{Lueckmann2017-sa,
  author    = {Lueckmann, Jan-Matthis and Gon{\c{c}}alves, Pedro J. and Bassetto, Giacomo and {\"O}cal, Kaan and Nonnenmacher, Marcel and Macke, Jakob H.},
  title     = {Flexible Statistical Inference for Mechanistic Models of Neural Dynamics},
  booktitle = {Advances in Neural Information Processing Systems},
  volume    = {30},
  pages     = {1289--1299},
  year      = {2017}
}

@inproceedings{Greenberg2019-on,
  author    = {Greenberg, David and Nonnenmacher, Marcel and Macke, Jakob H.},
  title     = {Automatic Posterior Transformation for Likelihood-Free Inference},
  booktitle = {Proceedings of the 36th International Conference on Machine Learning},
  pages     = {2404--2414},
  year      = {2019}
}

@article{DiRenzo2021CoarseGrainReview,
  author  = {Di Renzo, Alberto and Napolitano, Erasmo Salvatore and Di Maio, Francesco Paolo},
  title   = {Coarse-Grain {DEM} Modelling in Fluidized Bed Simulation: A Review},
  journal = {Processes},
  volume  = {9},
  number  = {2},
  pages   = {279},
  year    = {2021},
  doi     = {10.3390/pr9020279}
}

@article{Warne2018-gk,
  author  = {Warne, David J. and Baker, Ruth E. and Simpson, Matthew J.},
  title   = {Multilevel Rejection Sampling for Approximate {Bayesian} Computation},
  journal = {Computational Statistics \& Data Analysis},
  volume  = {124},
  pages   = {71--86},
  year    = {2018},
  doi     = {10.1016/j.csda.2018.02.009}
}

@article{Prescott2024-re,
  author  = {Prescott, Thomas P. and Warne, David J. and Baker, Ruth E.},
  title   = {Efficient Multifidelity Likelihood-Free {Bayesian} Inference with Adaptive Computational Resource Allocation},
  journal = {Journal of Computational Physics},
  volume  = {496},
  pages   = {112577},
  year    = {2024},
  doi     = {10.1016/j.jcp.2023.112577}
}

@article{Thiele2025-zm,
      title={Simulation-Efficient Cosmological Inference with Multi-Fidelity {SBI}}, 
      author={Leander Thiele and Adrian E. Bayer and Naoya Takeishi},
      year={2025},
      journal = {arXiv preprint arXiv:2507.00514}
}

@article{Hikida2025-ah,
  title   = {Multilevel neural simulation-based inference},
  author  = {Hikida, Yuga and Bharti, Ayush and Jeffrey, Niall and Briol, Fran{\c{c}}ois-Xavier},
  journal = {arXiv preprint arXiv:2506.06087},
  year    = {2025}
}

@inproceedings{Krouglova2026-ew,
  author    = {Krouglova, Anastasia N. and Johnson, Hayden R. and Confavreux, Basile and Deistler, Michael and Gon{\c{c}}alves, Pedro J.},
  title     = {Multifidelity Simulation-Based Inference for Computationally Expensive Simulators},
  booktitle = {International Conference on Learning Representations},
  year      = {2026}
}

@inproceedings{he2016deep,
  title={Deep residual learning for image recognition},
  author={He, Kaiming and Zhang, Xiangyu and Ren, Shaoqing and Sun, Jian},
  booktitle={Proceedings of the IEEE Conference on Computer Vision and Pattern Recognition},
  pages={770--778},
  year={2016}
}

@inproceedings{Ward2022-fq,
  author    = {Ward, Daniel and Cannon, Patrick and Beaumont, Mark and Fasiolo, Matteo and Schmon, Sebastian},
  title     = {Robust Neural Posterior Estimation and Statistical Model Criticism},
  booktitle = {Advances in Neural Information Processing Systems},
  volume    = {35},
  pages     = {33845--33859},
  year      = {2022}
}

@article{Makoviychuk2021IsaacGym,
  title   = {Isaac Gym: High Performance {GPU}-Based Physics Simulation for Robot Learning},
  author  = {Makoviychuk, Viktor and Wawrzyniak, Lukasz and Guo, Yunrong and Lu, Michelle and Storey, Kier and Macklin, Miles and Hoeller, David and Rudin, Nikita and Allshire, Arthur and Handa, Ankur and State, Gavriel},
  journal = {arXiv preprint arXiv:2108.10470},
  year    = {2021}
}

@INPROCEEDINGS{Matl2020-yt,
  author={Matl, Carolyn and Narang, Yashraj and Bajcsy, Ruzena and Ramos, Fabio and Fox, Dieter},
  booktitle={2020 IEEE International Conference on Robotics and Automation (ICRA)}, 
  title={Inferring the Material Properties of Granular Media for Robotic Tasks}, 
  year={2020},
  volume={},
  number={},
  pages={2770-2777},
  doi={10.1109/ICRA40945.2020.9197063}}

@techreport{Bishop1994-cl,
  author      = {Bishop, Christopher M.},
  title       = {Mixture Density Networks},
  institution = {Neural Computing Research Group, Aston University},
  year        = {1994}
}

@article{Giannoukou2025,


  author    = {Giannoukou, Katerina and Marelli, Stefano and Sudret, Bruno},
  title     = {Uncertainty-Aware Multifidelity Surrogate Modeling with Noisy Data},
  journal   = {ASCE-ASME Journal of Risk and Uncertainty in Engineering Systems, Part A: Civil Engineering},
  volume    = {11},
  number    = {3},
  pages     = {04025037},
  year      = {2025},
}

@inproceedings{wildberger2023flow,
  title={Flow Matching for Scalable Simulation-Based Inference},
  author={Wildberger, Jonas and Dax, Maximilian and Buchholz, Simon and Green, Stephen R and Macke, Jakob H and Schölkopf, Bernhard},
  booktitle={Advances in Neural Information Processing Systems},
  volume={36},
  pages={16837--16864},
  year={2023}
}

@article{Kennedy2001,
  author  = {Kennedy, Marc C. and O'Hagan, Anthony},
  title   = {Bayesian Calibration of Computer Models},
  journal = {Journal of the Royal Statistical Society: Series B (Statistical Methodology)},
  volume  = {63},
  number  = {3},
  pages   = {425--464},
  year    = {2001},
  doi     = {10.1111/1467-9868.00294}
}

@book{cover2006elements,
  author    = {Cover, Thomas M. and Thomas, Joy A.},
  title     = {Elements of Information Theory},
  edition   = {2nd},
  publisher = {Wiley-Interscience},
  address   = {Hoboken, NJ, USA},
  year      = {2006}
}

\section*{Appendix A}
\begingroup 
\footnotesize
\setcounter{proposition}{0}
\setcounter{theorem}{0}

\noindent \textbf{Proof of Proposition 1.}
For notational brevity, we write $p_h^\star (\theta | \xobs)$ and $\qnaive (\theta | \xobs)$ as $p_h^\star$ and $\qnaive$, respectively. From (\ref{eq:naive-mf}),
\begin{align*}
    q_{\mathrm{naive}}(A\spaceabs \xobs)
&= Z^{-1}\int_A p_l^\star(\theta\spaceabs \xobs)L_h(\xobs\spaceabs\theta)\,d\theta \\ 
&\le Z^{-1}M\int_A p_l^\star(\theta\spaceabs \xobs)\,d\theta  = C\epsilon,
\end{align*}
where $C:=Z^{-1}M$. Next, decompose the KL divergence over $A$ and its complement $A^c$:
$$
\KL\!\left(p_h^\star \|\,q_{\mathrm{naive}}\right) 
= \int_A p_h^\star
\log \frac{p_h^\star}{q_{\mathrm{naive}}}\,d\theta 
 + \int_{A^c} p_h^\star \log \frac{p_h^\star}{q_{\mathrm{naive}}}\,d\theta.
$$
For the first term, the log-sum inequality~\cite{cover2006elements} gives
$$
\int_A p_h^\star
\log \frac{p_h^\star}{q_{\mathrm{naive}}}\,d\theta 
 \ge p_h^\star(A\spaceabs \xobs)
\log\frac{p_h^\star(A\spaceabs \xobs)}{q_{\mathrm{naive}}(A\spaceabs \xobs)}.
$$
Substituting $p_h^\star(A\spaceabs \xobs)=\rho$ and $q_{\mathrm{naive}}(A\spaceabs \xobs)\le C\epsilon$, we get
\begin{equation*}
\int_A p_h^\star
\log \frac{p_h^\star}{q_{\mathrm{naive}}}\,d\theta
\ge \rho \log\frac{\rho}{C\epsilon}.
\end{equation*}
For the second term, applying the log-sum inequality on $A^c$ yields
$$
\int_{A^c} p_h^\star
\log \frac{p_h^\star}{q_{\mathrm{naive}}}\,d\theta 
\ge p_h^\star(A^c\spaceabs \xobs)
\log \frac{p_h^\star(A^c\spaceabs \xobs)}{q_{\mathrm{naive}}(A^c\spaceabs \xobs)}.
$$
Since $p_h^\star(A^c\spaceabs \xobs)=1-\rho$ and $q_{\mathrm{naive}}(A^c\spaceabs \xobs)\le 1$, we obtain
$$
\int_{A^c} p_h^\star
\log \frac{p_h^\star}{q_{\mathrm{naive}}}\,d\theta 
 \ge (1-\rho)\log\frac{1-\rho}{1} = (1-\rho)\log(1-\rho).
$$
Combining the two bounds gives
$$
\KL\!\left(p_h^\star\,\|\,q_{\mathrm{naive}}\right)  \ge \rho \log \frac{\rho}{C\epsilon} + (1-\rho)\log(1-\rho).
$$

\noindent \textbf{Proof of Proposition 2.}
From the definition, we have 
$$
\qcorr (\theta \spaceabs \xobs) = \int q_\phi (\theta \spaceabs z_l, \xobs) p_l(z_l \spaceabs \xobs)\,dz_l 
= \mathbb{E}_{z_l} [q_\phi ]
$$
Then, we can express 
$$
\KL\!\left(p_h^\star   \,\|\, \qcorr  \right)  = \KL ( \mathbb{E}_{z_l} [K^{\star} ] \, \|\, \mathbb{E}_{z_l} [q_\phi ]).
$$
From the joint convexity of KL divergence and Jensen's inequality, 
$$
    \KL ( \mathbb{E}_{z_l} [K^{\star} ] \, \|\, \mathbb{E}_{z_l} [q_\phi ]) \leq \mathbb{E}_{z_l} [\KL (K^{\star} \,\|\, q_\phi)].
$$
Next, by the definition of $\qcorr$ and $p^{\star}_h$,  and Tonelli's theorem,
\begin{gather*}
\qcorr(A\spaceabs x_{\mathrm{obs}})
= \int q_\phi(A\spaceabs z_l,x_{\mathrm{obs}}) p_l(z_l\spaceabs x_{\mathrm{obs}})\,dz_l,
\\
p_h^\star(A\spaceabs x_{\mathrm{obs}})
= \int K^\star(A\spaceabs z_l,x_{\mathrm{obs}}) p_l(z_l\spaceabs x_{\mathrm{obs}})\,dz_l \nonumber 
= \rho.
\end{gather*}
Therefore,
\begin{align*}
&\left| \qcorr (A\spaceabs x_{\mathrm{obs}}) - \rho \right| \nonumber \\
&= \left| \int \left[ q_\phi(A\spaceabs z_l,x_{\mathrm{obs}}) - K^\star(A\spaceabs z_l,x_{\mathrm{obs}}) \right] p_l(z_l\spaceabs x_{\mathrm{obs}})\,dz_l \right| \nonumber \\
&\leq \int \left| q_\phi(A\spaceabs z_l,x_{\mathrm{obs}}) - K^\star(A\spaceabs z_l,x_{\mathrm{obs}}) \right| p_l(z_l\spaceabs x_{\mathrm{obs}})\,dz_l \nonumber \\
&\leq \eta .
\end{align*}
This implies
\[
q_{\mathrm{bridge}}(A\mid x_{\mathrm{obs}})\ge \rho-\eta.
\]
\endgroup

\end{document}